\newtheorem{theorem}{Theorem}
\newtheorem{lemma}{Lemma}
\DeclareMathOperator*{\argmax}{arg\,max}
\newcommand{\Lb}{\mathbb{L}}
\newcommand{\Rb}{\mathbb{R}}
\newcommand{\Hb}{\mathbb{H}}
\newcommand{\Bb}{\mathbb{B}}
\title{Large-Margin Classification in Hyperbolic Space}
\author{
Hyunghoon Cho\\
  Computer Science and Artificial Intelligence Laboratory\\
  Massachusetts Institute of Technology\\
  Cambridge, MA 02139 \\
  \texttt{hhcho@mit.edu} \\
  \AND
Benjamin DeMeo\\
Department of Biomedical Informatics \\
Harvard University\\
Cambridge, MA 02138 \\
\texttt{bdemeo@g.harvard.edu} \\
   \AND
   Jian Peng \\
   Department of Computer Science \\
   University of Illinois at Urbana-Champaign \\
   Urbana, IL 61801 \\
   \texttt{jianpeng@illinois.edu} \\
   \AND
   Bonnie Berger \\
   Computer Science and Artificial Intelligence Laboratory\\
   Massachusetts Institute of Technology\\
   Cambridge, MA 02139 \\
   \texttt{bab@mit.edu}
}
\begin{document}

\maketitle

\begin{abstract}
Representing data in hyperbolic space can effectively capture latent hierarchical relationships. With the goal of enabling accurate classification of points in hyperbolic space while respecting their hyperbolic geometry, we introduce hyperbolic SVM\footnote{
A MATLAB implementation of hyperbolic SVM and our benchmark datasets are provided at: \url{https://github.com/hhcho/hyplinear}.}, a hyperbolic formulation of support vector machine classifiers, 
and elucidate 
through new theoretical work
its connection to the Euclidean counterpart. We demonstrate the performance improvement of hyperbolic SVM for multi-class prediction tasks on real-world complex networks as well as simulated datasets.
Our work allows
analytic pipelines that take the inherent hyperbolic geometry of the data into account in an end-to-end fashion without resorting to ill-fitting tools developed for Euclidean space.
\end{abstract}

\section{Introduction}

Learning informative feature representations of symbolic data, such as text documents or graphs, is a key factor determining the success of downstream pattern recognition tasks.
Recently, embedding data into hyperbolic space---a class of non-Euclidean spaces with constant negative curvature---has been receiving increasing attention due to its effectiveness in capturing latent hierarchical structure~\cite{AlanisLobato16,Chamberlain17,DeSa18,Krioukov10,Nickel17,Papadopoulos15}.
This capability is likely due to the key property of hyperbolic space that the amount of space grows \emph{exponentially} with the distance from a reference point, in contrast to the slower, polynomial growth in Euclidean space.
The geometry of tree-structured data, which similarly expands exponentially with distance from the root, can thus be accurately captured in hyperbolic space, but not in Euclidean space~\cite{Krioukov10}.

Motivated by this observation, a number of recent studies have focused on developing effective algorithms for embedding data in hyperbolic space in 
various domains, including natural language processing~\cite{DeSa18,Nickel17} and network science~\cite{AlanisLobato16,Chamberlain17,Papadopoulos15}. 
Using the hyperbolic embeddings with only a small number of dimensions, these methods were able to achieve superior performance in their respective downstream tasks (e.g., answering semantic queries of words or link prediction in complex networks) compared to their Euclidean counterparts.
These results agree with the intuition that better accounting for the inherent geometry of the data can improve downstream predictions.

However, current literature is largely limited regarding methods for standard pattern recognition tasks such as classification and clustering for data points that lie in hyperbolic space.
Unless the task of interest calls for only rudimentary analysis of the embeddings such as calculating the (hyperbolic) distances or angles between pairs of data points, practitioners are limited to applying algorithms that are designed for data points in Euclidean spaces.
For example, when Chamberlain et al.~\cite{Chamberlain17} set out to classify nodes in a graph after embedding them into hyperbolic space, they resorted to performing logistic regression on the embedding coordinates, which relies on decision boundaries that are linear in the Euclidean sense, but are somewhat arbitrary when viewed in the underlying hyperbolic space.

To enable principled analytic pipelines where the inherent geometry of the data is respected in an \emph{end-to-end} fashion, we generalize linear support vector classifiers, which are one of the most widely-used methods for classification, to data points in hyperbolic space.
Despite the complexities of hyperbolic distance calculation, we prove that support vector classification in hyperbolic space can in fact be performed by solving a simple optimization problem that resembles the Euclidean formulation of SVM, elucidating the close connection between the two.
We experimentally demonstrate the superior performance of hyperbolic SVM over the Euclidean version on two types of simulated datasets (Gaussian point clouds and evolving scale-free networks) as well as real network datasets analyzed by Chamberlain et al.~\cite{Chamberlain17}.

The rest of the paper is organized as follows. We review hyperbolic geometry and support vector classification in Sections 1 and 2 and introduce our method, hyperbolic SVM, in Section 3. We provide experimental evaluations of hyperbolic SVM in Section 4, and conclude with discussion and future directions in Section 5.

\section{Review of Hyperbolic Space Models}

While hyperbolic space cannot be isometrically embedded in Euclidean space, there are several useful models of hyperbolic geometry formulated as a subset of Euclidean space, each of which provides different insights into the properties of hyperbolic geometry~\cite{Anderson06}.
Our work makes use of three standard models of hyperbolic space---hyperboloid, Poincar\'{e} ball, and Poincar\'{e} half-space---as briefly described in the following. 

Consider an $(n+1)$-dimensional real-valued space $\Rb^{n+1}$, equipped with an inner product of the form
\begin{equation}
x*y=x_0y_0-x_1y_1-\cdots-x_ny_n.
\end{equation}
This is commonly known as the Minkowski space. The $n$-dimensional \emph{hyperboloid model} $\Lb^n$ sits inside $\Rb^{n+1}$ as the upper half (one of the two connected components) of a unit ``sphere'' with respect to the Minkowski inner product:
\begin{equation}
\Lb^n=\{x: x=(x_0,\dots,x_n)\in\Rb^{n+1}, x*x = 1, x_0>0 \}.
\end{equation}
The distance between two points in $\Lb^n$ is defined as the length of the geodesic path on the hyperboloid that connects the two points. It is known that every geodesic curve (i.e., hyperbolic line) in $\Lb^n$ is an intersection between $\Lb^n$ and a 2D plane that goes through the origin in the ambient Euclidean space $\Rb^{n+1}$, and vice versa.

Next, projecting each point of $\Lb^n$ onto the hyperplane $x_0=0$ using the rays emanating from $(-1,0,\dots,0)$ gives the \emph{Poincar\'{e} ball model}
\begin{equation}
\Bb^n=\{x:x=(x_1,\dots,x_n)\in\Rb^n, \|x\|^2<1\}
\end{equation}
where the correspondence to the hyperboloid model is given by
\begin{equation}
(x_0,\dots,x_n)\in \Lb^n \Leftrightarrow \left(\frac{x_1}{1+x_0},\dots,\frac{x_n}{1+x_0}\right)\in \Bb^n.
\end{equation}
Here, hyperbolic lines are either straight lines that go through the center of the ball or an inner arc of a Euclidean circle that intersects the boundary of the ball at right angles.

Another useful model of hyperbolic space is the \emph{Poincar\'{e} half-space model} 
\begin{equation}
\Hb^n=\{x:x=(x_1,\dots,x_n)\in\Rb^n,x_1>0\}
\end{equation}
which is obtained by taking the inversion of $\Bb^n$ with respect to a circle that has a radius twice that of $\Bb^n$ and is centered at a boundary point of $\Bb^n$.
If we center the inversion circle at $(-1,0,\dots,0)$, the resulting correspondence between $\Bb^n$ and $\Hb^n$ is given by 
\begin{equation}
(x_1,\dots,x_n)\in \Bb^n \Leftrightarrow \frac{1}{1+2x_1+\|x\|^2}\left(1-\|x\|^2,2x_2,\dots,2x_n\right)\in \Hb^n.
\end{equation}
In this model, hyperbolic lines are straight lines that are perpendicular to the boundary of $\Hb^n$ or Euclidean half-circles that are centered on the boundary of $\Hb^n$.

Existing methods for hyperbolic space embedding (e.g., ~\cite{AlanisLobato16,Chamberlain17,Nickel17}) output the embedding coordinates in their model of choice,
but as described above the coordinates of the corresponding points in other models can be easily calculated.

\section{Review of Support Vector Classification}

Let $\{(x^{(j)},y^{(j)})\}_{j=1}^m$ be a set of $m$ training data instances, where the feature vector $x^{(j)}$ is a point in a metric space $\mathcal{X}$ with distance function $d$, and $y^{(j)}\in\{1,-1\}$ denotes the true label for all $j$. Let $h:\mathcal{X}\mapsto \{1,-1\}$ be any decision rule. The \textit{geometric margin} of $h$ with respect to a single data instance $(x,y)$ can be defined as
\begin{equation}
\gamma(h,(x,y)) = yh\cdot \inf \{ d(x',x) : x'\in \mathcal{X}, h(x')\neq h(x) \}.
\end{equation}
Intuitively, the geometric margin $\gamma$ measures how far one needs to travel from a given point to obtain a different classification. Note that $\gamma$ is a signed  quantity; it is positive for points where our prediction is correct (i.e., $y=h$) and negative otherwise. Increasing the value of  $\gamma$ across the training data points is desirable; for correct classifications, we increase our confidence, and for incorrect classifications, we minimize the error. 

Maximum margin learning of the optimal decision rule $h^\star$, which provides the foundation for support vector machines, can now be formalized as
\begin{equation} \label{eq:mm}
h^\star = \argmax_{h\in\mathcal{H}} \min_{j\in[m]} \gamma(h, (x^{(j)},y^{(j)})),
\end{equation}
where $\mathcal{H}$ is the set of candidate decision rules that we consider.

If we let the data space $\mathcal{X}$ be $\mathbb{R}^n$ and $d$ be the Euclidean distance function and consider only linear classifiers, i.e., $\mathcal{H}=\{h(x;w):w\in\mathbb{R}^n\}$ where
\begin{equation}\label{eq:linearh}
h(x;w)=\begin{cases}
1 & w^T x >0, \\
-1 & \text{otherwise,}
\end{cases}
\end{equation}
then it can be shown that the max-margin problem given in Eq.~\ref{eq:mm} becomes equivalent to solving the following convex optimization problem:
\begin{align}\label{eq:esvm}
\text{minimize}_{w\in\mathbb{R}^{n}} &\ \frac{1}{2}\|w\|^2 \\
\text{subject to} &\ y^{(j)}(w^Tx^{(j)}) \ge 1, \forall j\in[m] 
\end{align}
The resulting algorithm that solves this problem (via its dual) is known as support vector machines (SVM).
Introducing a relaxation for the separability constraints gives a more commonly used soft-margin variant of SVM
\begin{align}\label{eq:esvmsoft}
\text{minimize}_{w\in\mathbb{R}^{n}} &\ \frac{1}{2}\|w\|^2 + C \sum_{j=1}^m \max(0,1-y^{(j)}(w^Tx^{(j)}))
\end{align}
where the parameter $C>0$ determines the tradeoff between minimizing misclassification and maximizing the margin. Solving this optimization problem either in its primal form or via its dual has been established as a standard tool for classification in a wide range of domains~\cite{Fan08}. Note that in our formulation the bias parameter is implicitly handled by appending $1$ to the end of each $x$.

\section{Hyperbolic Support Vector Classification}

We newly tackle the problem of solving the max-margin problem in Eq.~\ref{eq:mm} where the data points lie in hyperbolic space.
In particular, we will adopt the hyperboloid model to let $\mathcal{X}=\Lb^n$ and let $d$ be the hyperbolic distance function. Note that the data points need not be initially specified using the hyperboloid model, since coordinates in other models of hyperbolic space (e.g., Poincar\'{e} ball model) can be easily converted to $\Lb^n$.

Analogous to the Euclidean SVM, we consider a set of decision functions that lead to linear decision boundaries \emph{in the hyperbolic space}. 
It is known that any hyperbolic line (geodesic) in $\Lb^n$ is an intersection between $\Lb^n$ and a 2D Euclidean plane in the ambient space $\Rb^{n+1}$.
Thus, a natural way to define decision hyperplanes in $\Lb^n$ is to use $n$-dimensional hyperplanes in $\Rb^{n+1}$ as a proxy.
More precisely, we let 
\begin{equation}\label{eq:hyboH}
\mathcal{H}=\{h(x;w):w\in\Rb^{n+1},w*w<0\}
\end{equation}
where
\begin{equation}
h(x;w)=\begin{cases}
1 & w * x >0, \\
-1 & \text{otherwise,}
\end{cases}
\end{equation}
and $*$ denotes the Minkowski inner product. This decision function has the decision hyperplane $w*x=0$, which is an $n$-dimensional hyperplane in $\Rb^{n+1}$. Thus, the corresponding decision hyperplane in $\Lb^n$ obtained by its intersection with $w*x=0$ is the hyperbolic space-equivalent of a linear hyperplane, which can also be viewed as a union of geodesic curves.
We provide examples of linear decision hyperplanes in a two-dimensional hyperbolic space in Figure 1. For example for  $\Lb^2$, which can be visualized as an upper hyperboloid in 3D, our choice of $\mathcal{H}$ consists of every geodesic curve on the hyperboloid. Interestingly, our formulation obviates the need for a bias term as every hyperbolic hyperplane of codimension one is covered by our parametrization.

\begin{figure}
\centering
\includegraphics[trim={0 0.8cm 0 0.5cm},clip,scale=.9]{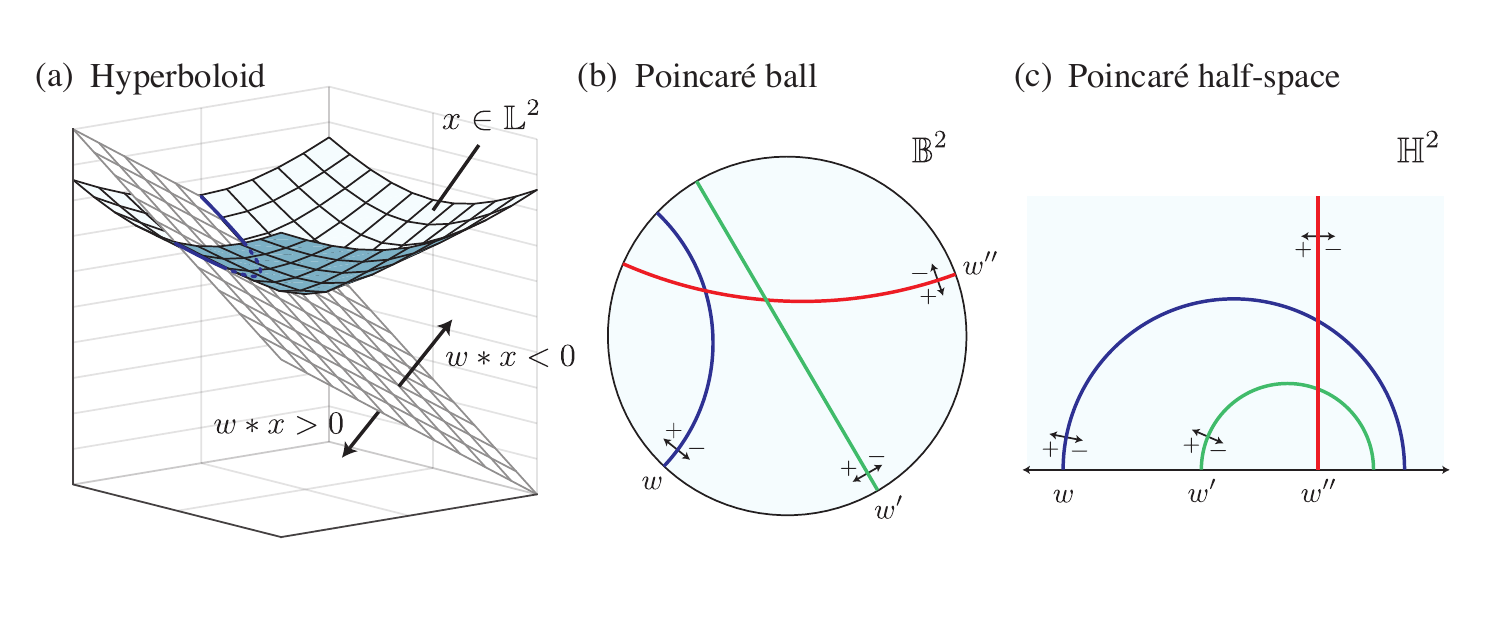}
\caption{\textbf{Linear decision hyperplanes in hyperbolic space models.} Examples where $w$, $w'$, and $w''$ denote different vectors in $\Rb^3$ that correspond to different decision hyperplanes in hyperbolic space. The correspondence between the hyperplanes in the two Poincar\'{e} models is meant as an illustration of concept and is not drawn to scale.}
\end{figure}

The condition that $w$ has negative Minkowski norm squared ($w*w<0$) is needed to ensure we obtain a non-trivial decision function; otherwise, the decision hyperplane does not intersect with $\Lb^n$  in $\Rb^{n+1}$ and thus all points in $\Lb^n$ are classified as the same label. 

Our first main result gives a simple closed-form expression for the geometric margin of a given data point to a decision hyperplane in hyperbolic space:
\begin{theorem}
Given $w\in\mathbb{R}^{n+1}$ such that $w*w<0$ and a data point $x\in\Lb^n$, the minimum hyperbolic distance from $x$ to the decision boundary associated with $w$, i.e., $\{z:w*z=0,z\in \Lb^n\}$, is given by 
\begin{equation}
\sinh^{-1}\left(\frac{w*x}{\sqrt{-w*w}}\right).
\end{equation}
\end{theorem}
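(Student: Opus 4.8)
The plan is to reduce the problem to a constrained optimization on the hyperboloid and solve it via Lagrange multipliers. First I would recall the standard fact that on the hyperboloid model the geodesic distance between $x,z\in\Lb^n$ is $d(x,z)=\cosh^{-1}(x*z)$, a consequence of the constraints $z*z=1$ and $z_0>0$. Since $\cosh^{-1}$ is strictly increasing, minimizing $d(x,z)$ over the decision boundary $\{z\in\Lb^n:w*z=0\}$ is equivalent to minimizing the linear functional $x*z$ subject to $z*z=1$ and $w*z=0$, deferring the sheet condition $z_0>0$ to the end. The feasible set is nonempty precisely because $w*w<0$, as already noted in the text, and the objective grows without bound as $z$ escapes to infinity, so a minimizer exists.

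Next I would form the Lagrangian $x*z-\lambda(z*z-1)-\mu(w*z)$ and, writing the Minkowski product as $x*y=x^{\top}Jy$ with $J=\mathrm{diag}(1,-1,\dots,-1)$, differentiate in $z$. Because $J$ is invertible the stationarity condition collapses to $x-2\lambda z-\mu w=0$, so every critical point has the form $z=\alpha x+\beta w$ for scalars $\alpha,\beta$. Imposing $w*z=0$ on this ansatz gives $\beta=-\alpha\,(w*x)/(w*w)$, and imposing $z*z=1$ (using $x*x=1$) yields the single equation $\alpha^{2}\bigl(1-(w*x)^{2}/(w*w)\bigr)=1$; since $w*w<0$ the bracketed factor is at least $1$, so $\alpha$ is real and both critical points are well defined.

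I would then evaluate the objective at the critical point. A direct computation gives $x*z=\alpha\bigl(1-(w*x)^{2}/(w*w)\bigr)$, and eliminating $\alpha$ with the constraint equation above reduces this to $x*z=\pm\sqrt{1-(w*x)^{2}/(w*w)}=\pm\sqrt{1+(w*x)^{2}/(-w*w)}$. The sheet requirement $z_0>0$, equivalently $x*z\ge 1$, selects the positive root, so the minimal value is $x*z=\sqrt{1+(w*x)^{2}/(-w*w)}$. Substituting into the distance formula and invoking the identity $\cosh^{-1}(\sqrt{1+t^{2}})=\sinh^{-1}(t)$ with $t=(w*x)/\sqrt{-w*w}$ yields exactly $\sinh^{-1}\bigl((w*x)/\sqrt{-w*w}\bigr)$, matching the claim, with the sign of $w*x$ recording on which side of the boundary $x$ lies.

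The main obstacle I anticipate is geometric bookkeeping rather than algebra: confirming that the stationary point $z=\alpha x+\beta w$ genuinely lands on the upper sheet and is the global minimizer rather than a saddle, since the ambient metric is indefinite and Lagrange multiplier arguments are less automatic here than in the Euclidean case. I would handle this by computing $z_0$ explicitly from $\alpha$ and $\beta$ to verify the sheet condition, and by noting that the stationary values form the single pair $\pm\sqrt{1+(w*x)^{2}/(-w*w)}$, so the admissible branch with $x*z\ge 1$ is necessarily the minimizer of the increasing function $\cosh^{-1}(x*z)$.
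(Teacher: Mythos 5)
Your proof is correct, and it takes a genuinely different route from the paper's. The paper never optimizes on the hyperboloid: it first applies an orthogonal Minkowski isometry to reduce $w$ to two nonzero coordinates, then transfers the problem to the Poincar\'{e} half-space model, where its Lemma 1 identifies the decision boundary with a Euclidean hemisphere of radius $\sqrt{(1-\lambda)/(1+\lambda)}$ centered at the origin, and finally invokes the classical fact that the set of points equidistant from a geodesic hyperplane is a hypercycle (a Euclidean sphere through the same ideal points) to extract the distance as a log-ratio $\log(\delta/r_{\tilde{w}})=\sinh^{-1}(c/r_{\tilde{w}})$. You instead stay intrinsic: using $d(x,z)=\cosh^{-1}(x*z)$, you minimize the linear functional $x*z$ over $\{z : z*z=1,\ w*z=0\}$ by Lagrange multipliers, find the critical points $z=\alpha x+\beta w$, and close with the identity $\cosh^{-1}(\sqrt{1+t^{2}})=\sinh^{-1}(|t|)$. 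Your argument is shorter and avoids the paper's main unproved ingredient---the hypercycle characterization in dimension $n$ is only asserted there, not derived---at the cost of assuming the standard hyperboloid distance formula, which the paper never needs; what the paper's half-space detour buys is the explicit geometric picture of the boundary and its equidistant surfaces that supports the figures and intuition in the main text. Two details you should make explicit to be airtight: (i) $\lambda\neq 0$ at any critical point, since $\lambda=0$ would force $x=\mu w$, impossible because $x*x=1>0>w*w$; and (ii) the constraint gradients $2Jz$ and $Jw$ (in your notation $x*y=x^{\top}Jy$) are linearly independent on the feasible set, again because $z*z=1>0>w*w$ forbids proportionality, so the multiplier rule is valid at the minimizer whose existence you established. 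Finally, as with the paper's own statement, your formula is really the signed margin; the unsigned distance is $\sinh^{-1}\left(|w*x|/\sqrt{-w*w}\right)$, and your closing remark that the sign of $w*x$ records the side of the boundary handles this at least as carefully as the paper does.
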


To obtain this result, we reduce the problem of calculating the minimum hyperbolic distance to the decision hyperplane to a Euclidean geometry problem by mapping the decision boundary and the data point onto the Poincar\'{e} half-space model, in which the decision boundary is characterized as a Euclidean half-sphere.
A full proof of Theorem 1 is provided in Supplementary Information.

Given this formula, one can apply a sequence of transformations to the max-margin classification problem in Eq.~\ref{eq:hyboH} for the hyperbolic setting to obtain the following result.

\begin{theorem}
The maximum margin classification problem (Eq.~\ref{eq:mm}), with hyperbolic feature space $\mathcal{X}=L^n$ with (hyperbolic) distance function $d$,  and hyperbolic-linear decision functions $\mathcal{H}$ as defined in Eq.~\ref{eq:hyboH}, is equivalent to the following optimization problem:
\begin{align}\label{eq:hybomm}
\normalfont \text{minimize}_{w\in\mathbb{R}^{n+1}} &\ -\frac{1}{2}w*w, \\
\normalfont \text{subject to} &\ y^{(j)}(w*x^{(j)}) \ge 1, \forall j\in[m], \\
&\ w*w<0.
\end{align}
\end{theorem}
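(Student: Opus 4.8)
The plan is to start from the max-margin objective in Eq.~\ref{eq:mm}, substitute the closed-form margin supplied by Theorem 1, and then collapse the result into Eq.~\ref{eq:hybomm} one transformation at a time, with the final reduction mirroring the classical Euclidean SVM argument.

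First I would rewrite the geometric margin of a hyperbolic-linear classifier in closed form. The delicate point is sign bookkeeping: Theorem 1 delivers the (nonnegative) minimum hyperbolic distance from $x$ to the boundary, which by the symmetry of the two sides of the hyperplane is $\sinh^{-1}\!\left(\frac{|w*x|}{\sqrt{-w*w}}\right)$, whereas $\gamma$ additionally carries the correctness sign $yh$ with $h=h(x;w)=\operatorname{sign}(w*x)$. I would absorb $h$ into the distance using the identity $h\,|w*x|=w*x$ together with the fact that $\sinh^{-1}$ is odd, so that for $h=\pm1$ one has $h\cdot\sinh^{-1}(a)=\sinh^{-1}(ha)$. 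This yields the clean signed expression $\gamma(h,(x,y))=\sinh^{-1}\!\left(\frac{y\,(w*x)}{\sqrt{-w*w}}\right)$, which is positive exactly when $y$ agrees with $\operatorname{sign}(w*x)$, as required.

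Next I would exploit that $\sinh^{-1}$ is strictly increasing, so the outer $\sinh^{-1}$ can be dropped without changing the maximizer: $\argmax_w\min_j \gamma$ is attained at the same $w$ as $\argmax_w\min_j \frac{y^{(j)}(w*x^{(j)})}{\sqrt{-w*w}}$. The resulting objective is invariant under $w\mapsto\lambda w$ for $\lambda>0$, since numerator and $\sqrt{-w*w}$ both scale by $\lambda$. Following the standard normalization, I would fix the scale by imposing $\min_j y^{(j)}(w*x^{(j)})=1$, which converts the functional-margin ratio into the constraint family $y^{(j)}(w*x^{(j)})\ge1$ for all $j$ (with equality at the closest point). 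Under this normalization the objective reduces to maximizing $\frac{1}{\sqrt{-w*w}}$, equivalently minimizing $-w*w$, equivalently minimizing $-\tfrac12 w*w$; the constraint $w*w<0$ persists throughout to keep the decision function non-trivial. This is precisely Eq.~\ref{eq:hybomm}.

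I expect the main obstacle to be the first step, namely reconciling the unsigned distance of Theorem 1 with the signed definition of $\gamma$ and then justifying that the normalization is legitimate. Concretely, I must verify that any optimizer of the scale-invariant problem can be rescaled into the feasible cone $\{w:w*w<0,\ y^{(j)}(w*x^{(j)})\ge1\}$ without altering the optimal value, so that the normalized program shares the same maximizer. As in the Euclidean hard-margin case of Eq.~\ref{eq:esvm}, this presumes the existence of a $w$ with strictly positive margin (i.e.\ hyperbolic separability), under which $\min_j y^{(j)}(w*x^{(j)})$ can be normalized to $1$; I would state this assumption explicitly and note that the non-separable regime is instead addressed by a soft-margin relaxation analogous to Eq.~\ref{eq:esvmsoft}.
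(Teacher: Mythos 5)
Your proposal is correct and follows essentially the same route as the paper's own proof: exploit the scale invariance of the margin formula from Theorem 1, rescale $w$ so that the closest point has functional margin $y^{(j)}(w*x^{(j)})=1$ (the canonical representation), and use the monotonicity of $\sinh^{-1}$ to reduce the objective to minimizing $-\tfrac{1}{2}w*w$ subject to $w*w<0$. If anything, you are more explicit than the paper, which glosses over the sign bookkeeping between the unsigned distance and the signed margin $\gamma$ and leaves the hyperbolic-separability assumption implicit.
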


The proof of Theorem 2 is exactly analogous to the Euclidean version, and is provided in the Supplementary Information. 

Our result suggests that despite the apparent complexity of hyperbolic distance calculation, the optimal (linear) maximum margin classifiers in the hyperbolic space can be identified via a relatively simple optimization problem that closely resembles the Euclidean version of SVM, where Euclidean inner products are replaced with Minkowski inner products.

Note that if we restrict $\mathcal{H}$ to decision functions where $w_0=0$, then our formulation coincides with with Euclidean SVM. Thus, Euclidean SVM can be viewed as a special case of our formulation where the first coordinate (corresponds to the time axis in Minkowski spacetime) is neglected.

Unlike Euclidean SVM, however, our optimization problem has a non-convex objective as well as a non-convex constraint.
Yet, if we restrict our attention to non-trivial, finite-sized problems where it is necessary and sufficient to consider only the set of $w$ for which at least one data point lies on either side of the decision boundary, then the negative norm constraint can be replaced with a convex alternative that intuitively maps out the convex hull of given data points in the ambient Euclidean space of $\Lb^n$.

Finally, the soft-margin formulation of hyperbolic SVM can be derived by relaxing the separability constraints as in the Euclidean case.
Instead of imposing a linear penalty on misclassification errors, which has an intuitive interpretation as being proportional to the minimum Euclidean distance to the correct classification in the Euclidean case, we impose a penalty proportional to the \emph{hyperbolic} distance to the correct classification.
Analogous to the Euclidean case, we fix the scale of penalty so that the margin of the closest point to the decision boundary (that is correctly classified) is set to $\sinh^{-1}(1)$.
This leads to the optimization problem
\begin{align}\label{eq:hsvmsoft}
\text{minimize}_{w\in\mathbb{R}^{n+1}} &\ -\frac{1}{2}w*w + C\sum_{j=1}^m \max(0, \sinh^{-1}(1) - \sinh^{-1}(y^{(j)}(w*x^{(j)}))), \\
\text{subject to} &\ w*w<0.
\end{align}
In all our experiments in the following section, we consider the simplest approach of solving the above formulation of hyperbolic SVM via projected gradient descent.
The initial $w$ is determined based on the solution $w'$ of a soft-margin SVM in the ambient Euclidean space of the hyperboloid model, so that $w*x=(w')^Tx$ for all $x$.
This provides a good initialization for the optimization and has an additional benefit of improving the stability of the algorithm in the presence of potentially many local optima.

\section{Experimental Results}

In the following, we compare hyperbolic SVM to the original Euclidean formulation of SVM (i.e., L2-regularized hinge-loss optimizer) on three different types of datasets---two simulated and one real.
For fair comparison, we restricted our attention to Euclidean SVM with a linear kernel, which has the same degrees of freedom as our method.
We discuss extending our work to non-linear classifications in hyperbolic space in Section 6.

\subsection{Evaluation Setting}

To enable multi-class classification for datasets with more than two classes, we adopt a one-vs-all (OVA) strategy, where a collection of binary classifiers are independently trained to distinguish each class from the rest.
For each method, the resulting prediction scores on the holdout data are transformed into probability outputs via Platt scaling~\cite{Platt1999} across all classes and collectively analyzed to quantify the overall classification accuracy.
Note that for hyperbolic SVM we use the Minkowski inner product between the learned weight vector and the data point in the hyperboloid model as the prediction score, which is a monotonic transformation of the geometric margin.

In both hyperbolic and Euclidean SVMs, the tradeoff between minimizing misclassification and maximizing margin is determined by the parameter $C$ (see Eqs.~\ref{eq:esvmsoft} and \ref{eq:hsvmsoft}).
In all our experiments, we determined the optimal $C\in\{0.1, 1, 10\}$ separately for each run via a nested cross-validation procedure.

Our main performance metric is macro-averaged area under the precision recall curve (AUPR), which is obtained by computing the AUPR of predicting each class against the rest separately, then taking the average across all classes.
The results based on other performance metrics, such as the area under the ROC curve and the micro-average variants of both metrics, led to similar conclusions across all our experiments.

\subsection{Classifying Mixture of Gaussians}

To evaluate hyperbolic SVM, we first generated a collection of 100 toy datasets by sampling data points from a Gaussian mixture model defined in the Poincar\'{e} disk model $\Bb^2$.
Note that, analogous to the Euclidean setting, the probability density function of an (isotropic) hyperbolic Gaussian distribution decays exponentially with the squared hyperbolic distance from the centroid, inversely scaled by the variance parameter.
For each dataset, we sampled four centroids from a zero-mean hyperbolic Gaussian distribution with variance parameter 1.5.
Then, we sampled 100 data points from a unit-variance hyperbolic Gaussian distribution centered at each of the four centroids to form a dataset of 400 points assigned to 4 classes.

The results from five independent trials of two-fold cross validation on each of the 100  datasets are summarized in Figure 2a. We observed a strongly significant improvement of hyperbolic SVM over the Euclidean version in terms of prediction accuracy, with a one-sided paired-sample $t$-test $p$-value $=6.17 \times 10^{-28}$.
 
We attribute the performance improvement of hyperbolic SVM to the fact that the class of decision functions it considers (linear hyperplanes in hyperbolic space) better match the underlying data distributions, which follow hyperbolic geometry by design.
The learned decision functions for both methods on an example dataset, generated in the same manner as above, are shown in Figures 2b and c.
Note that the apparent non-linearity of hyperbolic SVM decision boundaries are due to our use of the Poincar\'{e} disk model for visualization; in the hyperbolic space, these decision boundaries are in fact linear.

\begin{figure}
\centering
\includegraphics[trim={0 0.3cm 0 0.2cm},clip,scale=.9]{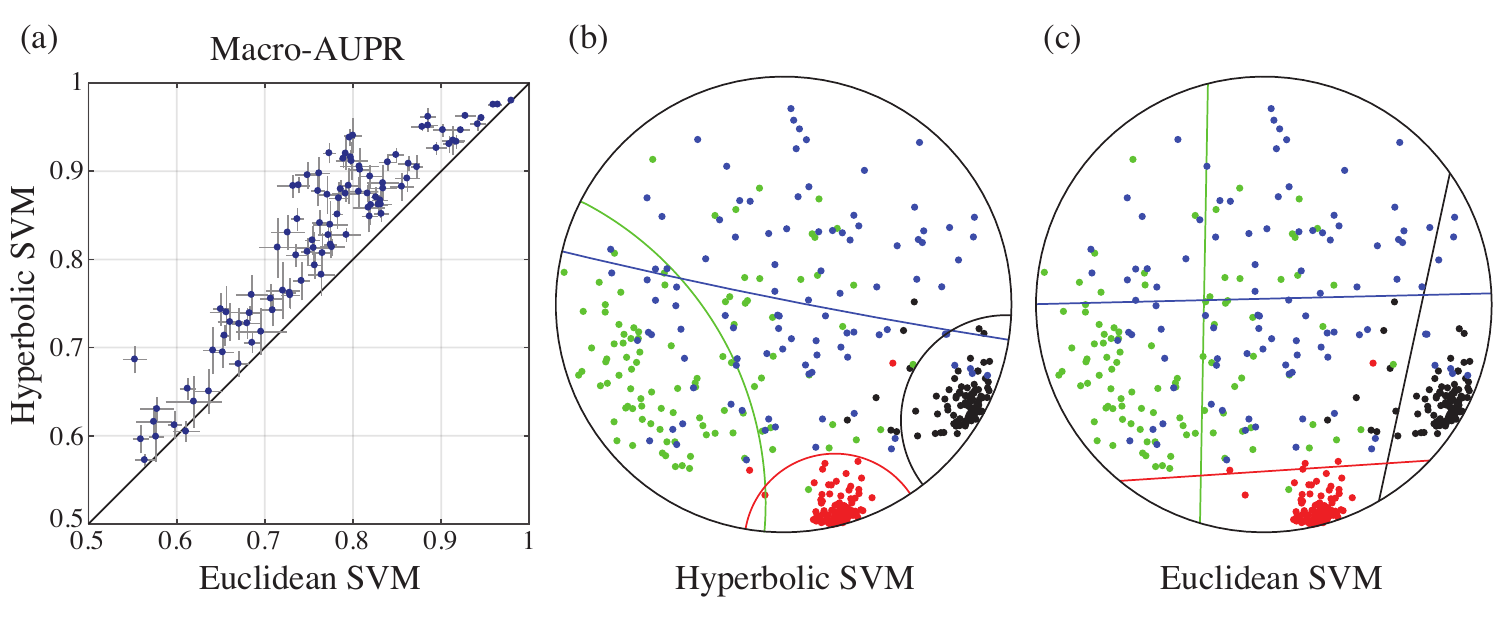}
\caption{\textbf{Multi-class classification of Gaussian mixtures in hyperbolic space}. (a) Two-fold cross validation results for 100 simulated Gaussian mixture datasets with 4 randomly positioned components and 100 points sampled from each component. Each dot represents the average performance over 5 trials. Vertical and horizontal lines represent standard deviations. Example decision hyperplanes for hyperbolic and Euclidean SVMs are shown in (b) and (c), respectively, using the Poincar\'{e} disk model. Color of each decision boundary denotes which component is being discriminated from the rest.}
\end{figure}

\subsection{Node Classification in Evolving Networks}
One of the key applications of hyperbolic space embedding is the modeling of complex networks that exhibit scale-free properties~\cite{AlanisLobato16,Papadopoulos15,Papadopoulos12}, such as protein-protein interaction networks.
Here, we set out to test whether hyperbolic SVM can improve classification performance for the embedding of such networks.

To this end, we generated 10 random scale-free networks using the popularity-vs-similarity (PS) model, which was shown to faithfully reproduce the properties of networks in many real-world applications~\cite{Papadopoulos12} (Figure 3a).
The PS model starts with an empty graph and, at each time point, creates a new node and attaches it to a certain number of existing nodes where the likelihood of an edge depends on the degree of the node (popularity) as well as node-node similarities.
We embedded each of the simulated networks into hyperbolic space using LaBNE~\cite{AlanisLobato16} (Figure 3b), a network embedding method directly based on the PS model.

Inspired by the gene function prediction task in biology~\cite{Cho16}, we then generated a multi-class, multi-label dataset based on each of the simulated networks as follows.
For each new label, we randomly choose a node in the network to be the first ``pioneer'' node to be annotated with the label.
Then, we replay the evolution of the network, and every time a new node is connected to an existing node with the given label, the label was propagated to the new node with a set probability (0.8 in our experiments).
This procedure results in a relatively clustered set of nodes within the network being assigned to the same label.
Such patterns are prevalent in protein-protein interaction networks, where genes or proteins that belong to the same functional category tend to be proximal and share many connections in the network.

For each target size range for the label, where size refers to the final number of nodes annotated with the label, we created 10 such labels for each network to obtain a multi-label classification dataset with 10 classes. This process was repeated 5 times for each of the 10 networks to generate a total of 150 datasets with varying label sizes (20-50, 50-100, and 100-200). We evaluated the prediction performance of hyperbolic SVM via two-fold cross validation procedure, where we held out the labels of half the nodes in the network and predicted them based on the other half.

Our results are summarized in Figure 3c. Across all label size ranges and networks, hyperbolic SVM matched or outperformed Euclidean SVM. The overall improvement of hyperbolic SVM was statistically significant, with a one-sided paired-sample $t$-test $p$-value of $3.99\times 10^{-21}$. Notably, our performance improvement was more pronounced for smaller size ranges (20-50 and 50-100).
   
\begin{figure}
\centering
\includegraphics[trim={0 0.3cm 0 0.3cm},clip,scale=.9]{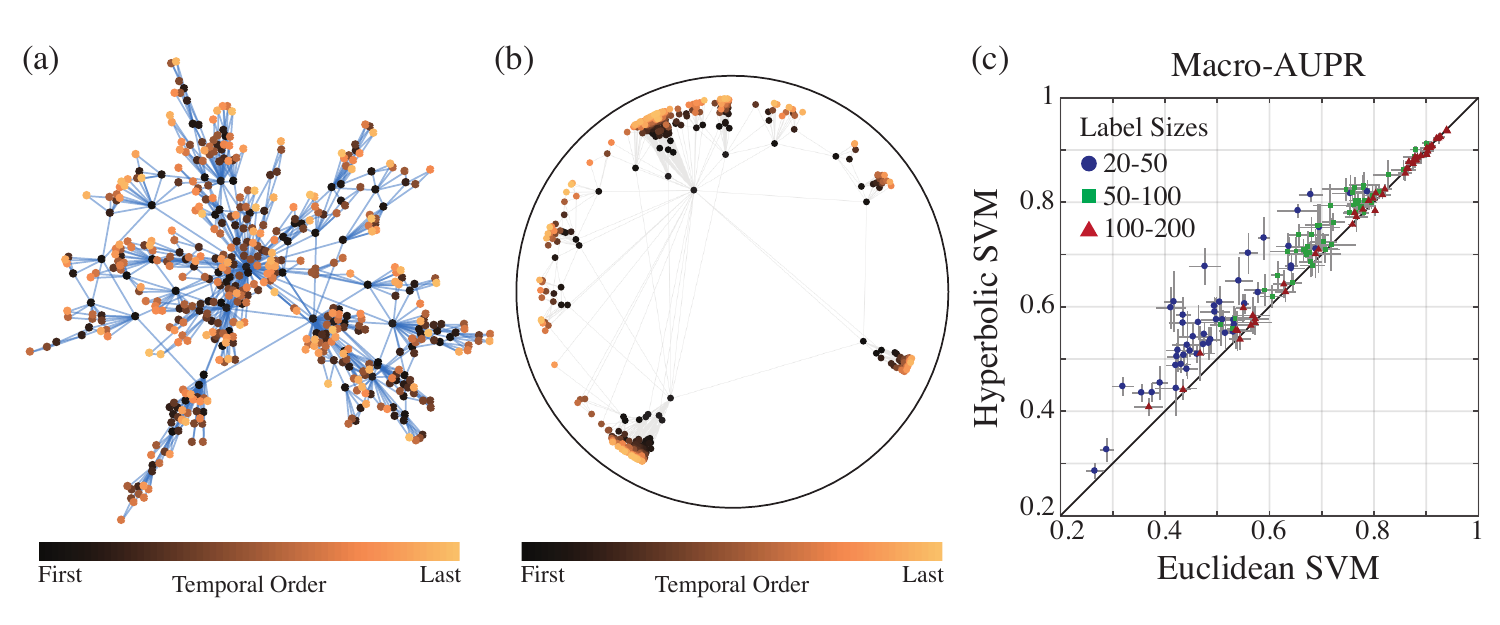}
\caption{\textbf{Multi-class multi-label classification of nodes in simulated evolving networks.} (a) One of the ten simulated networks used to generate our benchmark datasets, constructed using the PS model~\cite{Papadopoulos12} of network evolution. We set the number of nodes to $500$, average degree to $4$, scaling exponent to $2.25$, and temperature to $0$, to ensure sufficient clustering patterns in the network. (b) Embedding of the same network in two-dimensional hyperbolic space as visualized in the Poincar\'{e} disk model. Node classification in our experiment is performed using only the embedding coordinates, without access to the underlying network structure. (c) Two-fold cross validation results for predicting 10 labels per dataset, where each label is assigned to a random node at the time of its creation (in the PS model) and probabilistically propagated over newly added edges. Labels that were assigned to a number of nodes not in a pre-specified range were rejected and regenerated. We repeated the experiment for different size ranges for the labels, denoted by marker type/color. Each marker represents the average performance over 5 independent cross-validation trials. Vertical and horizontal lines represent standard deviations.
}
\end{figure}

\subsection{Node Classification in Real Networks}

To demonstrate the performance of hyperbolic SVM on real-world datasets, we tested it on four network datasets used by Chamberlain et al.~\cite{Chamberlain17} for benchmarking their hyperbolic network embedding algorithm. These datasets include: (1) \emph{karate}~\cite{Zachary77}: a social network of 34 people divided into two factions, (2) \emph{polbooks}\footnote{http://www-personal.umich.edu/~mejn/netdata/}: co-purchasing patterns of 105 political books around the time of 2004 presidential election divided into 3 affiliations, (3) \emph{football}~\cite{Girvan02}: football matches among 115 Division IA colleges in Fall 2000 divided into 12 leagues, and (4) \emph{polblogs}~\cite{Lada05}: a hyperlink network of 1224 political blogs in 2005 divided into two affiliations.
Note that we excluded the \emph{adjnoun} dataset due the near-random performance of all methods we considered.

For each dataset, we embedded the network into a two-dimensional hyperbolic space using Chamberlain et al.'s embedding method based on random walks~\cite{Chamberlain17}, which closely follows an existing algorithm called DeepWalk~\cite{Perozzi14} except Euclidean inner products are replaced with a measure of hyperbolic angle.
Given the hyperbolic embedding of each network, we performed two-fold cross validation to compare the prediction accuracy of hyperbolic SVM with Euclidean SVM. Note that Chamberlain et al. performed logistic regression on the embedding coordinates, which can be viewed as a variant of Euclidean SVM with a different penalty function on the misclassification margins.

For all four datasets, hyperbolic SVM matched or outperformed the performance achieved by Euclidean SVM. Notably, the datasets where the performance was comparable between the two methods (karate and polblogs) consisted of only two well-separated classes, in which case a linear decision boundary is expected to achieve a reasonably high performance.

In addition, we tested Euclidean SVM based on the Euclidean embeddings obtained by DeepWalk with dimensions 2, 5, 10, and 25. 
Even with as many as 25 dimensions, Euclidean SVM was not able to achieve competitive prediction accuracy based on the Euclidean embeddings across all datasets.
This supports the conclusion that hyperbolic geometry likely underlies these networks and that increasing the number of dimensions for the Euclidean embedding does not necessarily lead to representations that are as informative as the hyperbolic embedding.

\begin{table}
\centering
\begin{tabular}{ |c|c|c|c|c|c|  }
 \hline
 \multirow{2}{*}{Classifier} & Embedding & \multicolumn{4}{c|}{Dataset} \\
\cline{3-6}
  & (Dimension) & karate & polbooks & football & polblogs \\
 \hline
 Hyperbolic SVM & Hyperbolic (2) & $\textbf{0.86} \pm 0.03$ & $\textbf{0.73} \pm 0.04$ & $\textbf{0.24} \pm 0.03$ & $\textbf{0.93} \pm 0.01$ \\ 
 Euclidean SVM & Hyperbolic (2) & $\textbf{0.86} \pm 0.03$ & $0.66 \pm 0.02$ & $0.21 \pm 0.01$ & $\textbf{0.93} \pm 0.01$ \\ 
 Euclidean SVM & Euclidean (2)  & $0.47 \pm 0.07$ & $0.34 \pm 0.03$ & $0.09 \pm 0.01$ & $0.60 \pm 0.09$ \\ 
 Euclidean SVM & Euclidean (5)  & $0.55 \pm 0.08$ & $0.35 \pm 0.03$ & $0.10 \pm 0.01$ & $0.69 \pm 0.04$ \\ 
 Euclidean SVM & Euclidean (10) & $0.50 \pm 0.08$ & $0.36 \pm 0.03$ & $0.10 \pm 0.01$ & $0.72 \pm 0.04$ \\ 
 Euclidean SVM & Euclidean (25) & $0.50 \pm 0.09$ & $0.37 \pm 0.04$ & $0.11 \pm 0.02$ & $0.80 \pm 0.03$ \\ 
 \hline
\end{tabular}

\ 
\caption{\textbf{Node classification performance on four real-world network datasets.} We performed two-fold cross validation experiments on four real-world network datasets described in main text. For all four datasets, hyperbolic SVM matched or outperformed the performance achieved by Euclidean SVM (the datasets where the performance of two methods were comparable, karate and polblogs, contained only two well-separated classes.) Methods are evaluated based on macro-averaged area under the precision recall curve. Mean performance summarized over 5 cross-validation trials over 5 different embeddings for each dataset is shown, each followed by the standard deviation. Numbers corresponding to the best performance on each dataset are shown in boldface.
}
\end{table}

\section{Discussion and Future Work}

We proposed support vector classification in hyperbolic space and demonstrated its effectiveness in classifying points in hyperbolic space on three different types of datasets.
Although we focused on decision functions that are linear in hyperbolic space (i.e., based on hyperbolically geodesic decision hyperplanes),
our formulation of hyperbolic SVM may potentially allow the development of non-linear classifiers, drawing intuition from kernel methods for SVM.
In particular, we are interested in exploring the use of radial basis function kernels in hyperbolic space, which are widely-used in the Euclidean setting. In addition, while our experimental results were based on two-dimensional hyperbolic spaces, our formulation naturally extends to higher dimensional hyperbolic spaces, which may be of interest in future applications.

More broadly, our work belongs to a growing body of literature that aims to develop learning algorithms that directly operate over a Riemannian manifold~\cite{Porikli10,Tuzel08}. Linear hyperplane-based classifiers and clustering algorithms have previously been formulated for spherical spaces~\cite{Dhillon01,Lebanon04,Wilson10}. To the best of our knowledge, 
our work is the first to develop and experimentally demonstrate support vector classification in hyperbolic geometry. We envision further development of hyperbolic space-equivalents of other standard machine learning tools in the near future.


\newpage
\bibliographystyle{plain}
\bibliography{main}

\begin{thebibliography}{10}

\bibitem{AlanisLobato16}
Gregorio Alanis-Lobato, Pablo Mier, and Miguel~A Andrade-Navarro.
\newblock {Efficient embedding of complex networks to hyperbolic space via
  their Laplacian}.
\newblock {\em Scientific reports}, 6:30108, 2016.

\bibitem{Anderson06}
James~W Anderson.
\newblock {\em Hyperbolic geometry}.
\newblock Springer Science \& Business Media, 2006.

\bibitem{Chamberlain17}
B~P Chamberlain, J~Clough, and M~P Deisenroth.
\newblock {Neural Embeddings of Graphs in Hyperbolic Space}.
\newblock {\em CoRR}, stat.ML, May 2017.

\bibitem{Cho16}
Hyunghoon Cho, Bonnie Berger, and Jian Peng.
\newblock {Compact Integration of Multi-Network Topology for Functional
  Analysis of Genes}.
\newblock {\em Cell Systems}, 3(6):540--548.e5, 2016.

\bibitem{DeSa18}
Christopher De~Sa, Albert Gu, Christopher R{\'e}, and Frederic Sala.
\newblock {Representation Tradeoffs for Hyperbolic Embeddings}.
\newblock {\em CoRR}, abs/1804.03329, 2018.

\bibitem{Dhillon01}
Inderjit~S Dhillon and Dharmendra~S Modha.
\newblock {Concept Decompositions for Large Sparse Text Data Using Clustering}.
\newblock {\em Machine Learning}, 42(1):143--175, January 2001.

\bibitem{Fan08}
Rong-En Fan, Kai-Wei Chang, Cho-Jui Hsieh, Xiang-Rui Wang, and Chih-Jen Lin.
\newblock {LIBLINEAR: A library for large linear classification}.
\newblock {\em Journal of Machine Learning Research}, 9(Aug):1871--1874, 2008.

\bibitem{Girvan02}
Michelle Girvan and Mark~EJ Newman.
\newblock Community structure in social and biological networks.
\newblock {\em Proceedings of the national academy of sciences},
  99(12):7821--7826, 2002.

\bibitem{Krioukov10}
Dmitri Krioukov, Fragkiskos Papadopoulos, Maksim Kitsak, Amin Vahdat, and
  Mari~'an Bogu~n 'a.
\newblock {Hyperbolic geometry of complex networks}.
\newblock {\em Phys. Rev. E}, 82:036106, September 2010.

\bibitem{Lada05}
AA~Lada and Glance Natalie.
\newblock The political blogosphere and the 2004 us election.
\newblock In {\em Proceedings of the 3rd international workshop on Link
  discovery}, volume~1, pages 36--43, 2005.

\bibitem{Lebanon04}
Guy Lebanon and John Lafferty.
\newblock {Hyperplane Margin Classifiers on the Multinomial Manifold}.
\newblock In {\em Proceedings of the Twenty-first International Conference on
  Machine Learning}, pages 66--66, New York, NY, USA, 2004. ACM.

\bibitem{Nickel17}
Maximillian Nickel and Douwe Kiela.
\newblock {Poincar{\'e} embeddings for learning hierarchical representations}.
\newblock In {\em Advances in Neural Information Processing Systems}, pages
  6341--6350, 2017.

\bibitem{Papadopoulos15}
Fragkiskos Papadopoulos, Rodrigo Aldecoa, and Dmitri Krioukov.
\newblock {Network geometry inference using common neighbors}.
\newblock {\em Physical Review E}, 92(2):022807, 2015.

\bibitem{Papadopoulos12}
Fragkiskos Papadopoulos, Maksim Kitsak, M~{\'A}ngeles Serrano, Mari{\'a}n
  Bogu{\~n}{\'a}, and Dmitri Krioukov.
\newblock {Popularity versus similarity in growing networks}.
\newblock {\em Nature}, 489(7417):537--540, September 2012.

\bibitem{Perozzi14}
Bryan Perozzi, Rami Al-Rfou, and Steven Skiena.
\newblock {DeepWalk: Online Learning of Social Representations}.
\newblock In {\em Proceedings of the 20th ACM SIGKDD International Conference
  on Knowledge Discovery and Data Mining}, pages 701--710, New York, NY, USA,
  2014. ACM.

\bibitem{Platt1999}
John Platt et~al.
\newblock Probabilistic outputs for support vector machines and comparisons to
  regularized likelihood methods.
\newblock {\em Advances in large margin classifiers}, 10(3):61--74, 1999.

\bibitem{Porikli10}
Fatih Porikli.
\newblock {Learning on Manifolds}.
\newblock In Edwin~R Hancock, Richard~C Wilson, Terry Windeatt, Ilkay Ulusoy,
  and Francisco Escolano, editors, {\em Structural, Syntactic, and Statistical
  Pattern Recognition}, pages 20--39, Berlin, Heidelberg, 2010. Springer Berlin
  Heidelberg.

\bibitem{Tuzel08}
O~Tuzel, F~Porikli, and P~Meer.
\newblock {Pedestrian Detection via Classification on Riemannian Manifolds}.
\newblock {\em IEEE Transactions on Pattern Analysis and Machine Intelligence},
  30(10):1713--1727, October 2008.

\bibitem{Wilson10}
Richard~C Wilson and Edwin~R Hancock.
\newblock {Spherical Embedding and Classification}.
\newblock In Edwin~R Hancock, Richard~C Wilson, Terry Windeatt, Ilkay Ulusoy,
  and Francisco Escolano, editors, {\em Structural, Syntactic, and Statistical
  Pattern Recognition}, pages 589--599, Berlin, Heidelberg, 2010. Springer
  Berlin Heidelberg.

\bibitem{Zachary77}
Wayne~W Zachary.
\newblock An information flow model for conflict and fission in small groups.
\newblock {\em Journal of anthropological research}, 33(4):452--473, 1977.

\end{thebibliography}

\newpage
\section*{Proof of Theorem 1}

To derive margin-based classifiers in hyperbolic space, we first derive a closed-form expression for the geometric margin of a given data point $x\in L^n$ to a ``linear'' decision hyperplane defined by $w\in\mathbb{R}^{n+1}$ as we described in the previous section.

First, we perform an isometric transformation to simplify calculations. Let $A$ be an orthogonal matrix in $\mathbb{R}^n$. Then, the matrix 
\begin{equation*}
B=\begin{bmatrix}
1 & 0 \\
0 & A
\end{bmatrix}
\end{equation*}
represents an isometric, orthogonal transformation of the Minkowski space, since it preserves the associated inner product as follows
$$
(Bu) * (Bv) = u_0v_0 - u_{1:n}^TA^TAv_{1:n}= u_0v_0 - u_{1:n}^Tv_{1:n} = u*v
$$
for any $u$ and $v$.
If we set the first column of $A$ to $w_{1:n}/\|w_{1:n}\|$ where $\|\cdot\|$ denotes the Euclidean norm, then due to the preservation of geodesics under isometry, the margin of interest becomes equivalent to the margin of a transformed point 
$\tilde{x}=Bx$
to the decision hyperplane parameterized by
$$\tilde{w}=Bw=(w_0,\|w_{1:n}\|,0,\dots,0).$$
The first two coordinates of $\tilde{x}$ are given in terms of the original coordinates as 
$$\tilde{x}_0=x_0\text{ and }\tilde{x}_1=\frac{w_{1:n}^Tx_{1:n}}{\|w_{1:n}\|}.$$
Given such a transformation exists for any point in $L^n$, it is sufficient to derive the margin of an arbitrary point $\tilde{x}\in L^n$ with respect to a decision hyperplane represented by a weight vector $\tilde{w}$, where $$\tilde{w}_2=\cdots=\tilde{w}_n=0.$$
We use $\lambda$ to denote the ratio between the first two coordinates of $\tilde{w}$ as
$$
\lambda:=\frac{\tilde{w}_0}{\tilde{w}_1}.
$$
Importantly, the condition that $\tilde{w}*\tilde{w}<0$ in order for $\tilde{w}$ to represent a non-trivial decision function is equivalent to the condition that
$|\lambda| < 1$, which we will assume in our derivation.

The following lemma characterizes the decision hyperplane defined by such $\tilde{w}$:

\begin{lemma}
The decision hyperplane $D_{\tilde{w}}=\{x:\tilde{w}*x = 0,x\in L^n\}$ corresponding to a weight vector $\tilde{w}\in\mathbb{R}^{n+1}$ where $\tilde{w}_2=\cdots=\tilde{w}_n=0$ is equivalently represented in the Poincar\'{e} half-space model as a Euclidean hypersphere centered at the origin with radius $\sqrt{(1-\lambda)/(1+\lambda)}$, where $\lambda=\tilde{w}_0/\tilde{w}_1$.
\end{lemma}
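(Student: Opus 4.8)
The plan is to track the decision boundary $D_{\tilde w}$ through the two coordinate changes of Section 2 (hyperboloid $\to$ Poincar\'{e} ball $\to$ half-space) and show that its image has constant Euclidean norm. First I would write out the defining equation explicitly. Since $\tilde{w}_2 = \cdots = \tilde{w}_n = 0$, the Minkowski condition $\tilde{w}*x = 0$ collapses to $\tilde{w}_0 x_0 - \tilde{w}_1 x_1 = 0$, i.e.\ $x_1 = \lambda x_0$ on $D_{\tilde w}$. So the boundary is precisely the slice of $\Lb^n$ cut out by this single linear relation together with the hyperboloid constraint $x*x=1$.

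Next I would push a generic point $x \in D_{\tilde w}$ into the Poincar\'{e} ball using the stated correspondence, obtaining ball coordinates $b_i = x_i/(1+x_0)$. The two quantities I need are $b_1 = \lambda x_0/(1+x_0)$ and $\|b\|^2$. Using $x*x = x_0^2 - \sum_{i\ge 1} x_i^2 = 1$ to substitute $\sum_{i\ge 1} x_i^2 = x_0^2-1$, I get $\|b\|^2 = (x_0^2-1)/(1+x_0)^2 = (x_0-1)/(x_0+1)$. The key structural observation is that both $b_1$ and $\|b\|^2$ depend only on $x_0$ and $\lambda$.

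Then I would apply the ball-to-half-space map and compute the squared Euclidean norm $\|h\|^2$ of the image point $h$. Separating the first coordinate from the rest and using $\sum_{i\ge 2}(2b_i)^2 = 4(\|b\|^2 - b_1^2)$, the numerator of $\|h\|^2$ becomes $(1-\|b\|^2)^2 + 4(\|b\|^2 - b_1^2)$, which I would regroup as the difference of squares $(1+\|b\|^2)^2 - (2b_1)^2$; paired with the denominator $(1+\|b\|^2+2b_1)^2$ this produces the cancellation $\|h\|^2 = (1+\|b\|^2-2b_1)/(1+\|b\|^2+2b_1)$. Substituting $1+\|b\|^2 = 2x_0/(x_0+1)$ and $2b_1 = 2\lambda x_0/(x_0+1)$, the entire $x_0$-dependence cancels and leaves $\|h\|^2 = (1-\lambda)/(1+\lambda)$, independent of the chosen point.

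Because this value is identical for every $x \in D_{\tilde w}$, the image lies on the Euclidean sphere centered at the origin of radius $\sqrt{(1-\lambda)/(1+\lambda)}$; the standing assumption $|\lambda| < 1$ (equivalent to $\tilde{w}*\tilde{w} < 0$) is exactly what makes this radius real and positive. To close the lemma I would add a short dimension argument: $D_{\tilde w}$ is a connected $(n-1)$-dimensional hypersurface and the coordinate maps are diffeomorphisms onto their images, so the image fills out the portion of this sphere lying in $\Hb^n$ rather than a proper subset. The main obstacle I anticipate is purely bookkeeping rather than conceptual: the norm only collapses to a constant after the correct difference-of-squares regrouping, so the real risk is mis-tracking which ball coordinate is $b_1$ versus the $b_i$ with $i\ge 2$, which would spoil the cancellation.
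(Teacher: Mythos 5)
Your forward computation is correct and is essentially the paper's own algebra: the same difference-of-squares regrouping yields $\|h\|^2 = (1+\|b\|^2-2b_1)/(1+\|b\|^2+2b_1)$, and the substitutions $\|b\|^2=(x_0-1)/(x_0+1)$ and $b_1=x_1/(1+x_0)$ collapse this to $(1-\lambda)/(1+\lambda)$. The gap is in the reverse inclusion. The lemma asserts a set equality---the image of $D_{\tilde w}$ \emph{is} the hypersphere portion in $\Hb^n$, not merely a subset of it---and this equality is used downstream in the proof of Theorem 1, where the ideal points of the decision hyperplane are read off from the sphere. Your ``short dimension argument'' is not a valid inference as stated: a proper open subset of the half-sphere is also a connected $(n-1)$-dimensional hypersurface that is the diffeomorphic image of a connected $(n-1)$-manifold, so connectedness plus matching dimension cannot force the image to fill the sphere. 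To make the topological route rigorous you would need both openness of the image in the sphere (invariance of domain) and closedness (which holds because $D_{\tilde w}$ is closed in $\Lb^n$, being the zero set of $x\mapsto \tilde w * x$, and the model correspondence is a homeomorphism of $\Lb^n$ onto $\Hb^n$), plus connectedness of the target half-sphere; then a nonempty, open, and closed subset of a connected space is the whole space.

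The paper sidesteps all of this by \emph{not} substituting $x_1=\lambda x_0$ at the start. It runs the identical computation for an arbitrary $x\in\Lb^n$, obtaining $\|h\|^2 = \bigl(1-\tfrac{x_1}{x_0}\bigr)/\bigl(1+\tfrac{x_1}{x_0}\bigr)$, and then invokes the injectivity of $z\mapsto(1-z)/(1+z)$: hence $\|h\|^2=(1-\lambda)/(1+\lambda)$ holds \emph{if and only if} $x_1/x_0=\lambda$, which (using $\tilde w_2=\cdots=\tilde w_n=0$) is exactly $\tilde w * x=0$. This single observation delivers both inclusions simultaneously with no topology at all. I recommend restructuring your proof accordingly: keep $x_1/x_0$ free through the computation and apply injectivity at the end, or else replace your dimension argument with the open-and-closed argument sketched above.
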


\begin{proof}
It suffices to show for any $x\in\Lb^n$,
$$
x\in D_{\tilde{w}} \iff \| g(x) \|^2 = \frac{1-\lambda}{1+\lambda},
$$
where $g:\Lb^n\mapsto\Hb^n$ maps points in the hyperboloid model to the corresponding points in the half-space model and $\|\cdot\|$ denotes the Euclidean norm.

Let $h\in\Hb^n$, $b\in\Bb^n$, and $x\in\Lb^n$ be the points in the half-space model, the ball model, and the hyperboloid model, respectively, that represent the same point in the hyperbolic space. Since
$$
h=\frac{1}{1+2b_1+\|b\|^2} (1-\|b\|^2,2b_2,\dots,2b_n),
$$
we have
$$
\|h\|^2 = \frac{1}{(1+2b_1+\|b\|^2)^2} \left[ (1-\|b\|^2)^2 + \sum_{i=2}^n 4b_i^2 \right].
$$
Note that
\begin{align*}
(1-\|b\|^2)^2 + \sum_{i=2}^n 4b_i^2 &=(1-\|b\|^2)^2 + 4\|b\|^2 -4b_1^2 \\
&= (1+\|b\|^2)^2 -4b_1^2 \\
&= (1+2b_1+\|b\|^2)(1-2b_1+\|b\|^2),
\end{align*}
which gives us
$$
\|h\|^2 = \frac{1-2b_1+\|b\|^2}{1+2b_1+\|b\|^2}.
$$
Next, recall 
$$
b = \frac{1}{x_0+1}(x_1,\dots,x_n),
$$
which leads to
$$
\|b\|^2 =  \frac{x_1^2+\cdots+x_n^2}{(x_0+1)^2} = \frac{x_0^2-1}{(x_0+1)^2} = \frac{x_0-1}{x_0+1},
$$
where we used the fact that $x*x=1$ since $x\in\Lb^n$.

We can now express $\|h\|^2$ in terms of $x$ as
$$
\|h\|^2 = \frac{1-\frac{2x_1}{x_0+1}+\frac{x_0-1}{x_0+1}}{1+\frac{2x_1}{x_0+1}+\frac{x_0-1}{x_0+1}} = \frac{1-\frac{x_1}{x_0}}{1+\frac{x_1}{x_0}}.
$$
Because the function $f(z)=(1-z)/(1+z)$ is bijective,
$$
\|h\|^2 = \frac{1-\lambda}{1+\lambda} \iff \frac{x_1}{x_0} = \lambda = \frac{\tilde{w}_0}{\tilde{w}_1}.
$$
Finally, note that
\begin{align*}
\frac{x_1}{x_0} = \frac{\tilde{w}_0}{\tilde{w}_1} &\iff \tilde{w}_0x_0 - \tilde{w}_1x_1 =0 \\
&\iff {\tilde{w}}*x =0 \\
&\iff x\in D_{\tilde{w}},
\end{align*}
where we used the fact that $\tilde{w}_2=\cdots=\tilde{w}_n=0$.
\end{proof}

It is a known fact that, in a two-dimensional hyperbolic space, the set of points that are equidistant to a hyperbolic line on the same side of the line forms what is called a \emph{hypercycle}, which takes the shape of a Euclidean circle in the Poincar\'{e} half-plane model that goes through the same two \emph{ideal points} as the reference line. Note that the ideal points refer to the two end points of a hyperbolic line in the half-plane model (a circular arc representing a geodesic curve) where the hyperbolic line meets the boundary of the half-plane. 

In a high-dimensional setting, an analogous property is that the set of points equidistant to a hyperbolic hyperplane takes the shape of a Euclidean hypersphere that intersects the boundary of the Poincar\'{e} half-space at the same ideal points as the hyperplane.

Because our decision hyperplane as characterized in Lemma 1 has its center at the origin of the half-space model, any hypersphere that intersect the boundary of the half-space at the ideal points of the decision hyperplane must have a center $(c,0,\dots,0)$ for some $c\in\Rb$. In other words, any hypersphere representing a hypercycle with respect to our given decision boundary is centered on the first coordinate axis (which is perpendicular to the boundary of the half-space).

Let $\tilde{h}\in\Hb^n$ be the point in the half-space model that corresponds to the transformed data point $\tilde{x}$ we described earlier. One way to reason about the margin of $\tilde{x}$ with respect to the decision hyperplane defined by $\tilde{w}$ is to find which hypercycle $h$ belongs to. We can do so using the fact that, in addition to $\tilde{h}$, an ideal point of the decision hyperplane $(0,r_{\tilde{w}},0,\dots,0)$ lies on the hypercycle, where
$$
r_{\tilde{w}} := \sqrt{\frac{1-\lambda}{1+\lambda}}
$$
from Lemma~1.
More precisely, we can solve for the center of the hypercycle parameterized by $c$ using the equation
$$
(\tilde{h}_1-c)^2 + \sum_{i=2}^n \tilde{h}_i^2 = c^2 + r_{\tilde{w}}^2,
$$
which states that $\tilde{h}$ and $(0,r_{\tilde{w}},0,\dots,0)$ are equidistant from the center of the hypercycle $(c,0,\dots,0)$.
This gives us
$$
c = \frac{\| \tilde{h} \|^2 - r_{\tilde{w}}^2}{2\tilde{h}_1}.
$$
Now, using the relations
$$\|\tilde{h}\|^2 = \frac{\tilde{x}_0-\tilde{x}_1}{\tilde{x}_0+\tilde{x}_1}\text{ and }\tilde{h}_1=\frac{1}{\tilde{x}_0+\tilde{x}_1},$$
which can be derived based on the mapping between the hyperboloid and the half-space models, we obtain that
$$
c = \frac{(1-r_{\tilde{w}}^2)\tilde{x}_0 - (1+r_{\tilde{w}}^2)\tilde{x}_1}{2}.
$$

Next, we find a point $(\delta,0,\dots,0)\in\Hb^n$ that lies on the first coordinate axis and has the same margin with respect to the decision hyperplane as the given data point $\tilde{h}$.

Since we know that the radius of this hypercycle is given by $\sqrt{c^2+r_{\tilde{w}}^2}$ (i.e., the distance from $(c,0,\dots,0)$ to $(0,r_{\tilde{w}},0,\dots,0)$),
we get $$
\delta=c+\sqrt{c^2+r_{\tilde{w}}^2}.
$$

Finally, using the hyperbolic distance formula in the half-space model, we obtain that the hyperbolic distance between $(\delta,0,\dots,0)$ and $(r_{\tilde{w}},0,\dots,0)$, which is equal to the unsigned geometric margin of our data point, is given by the log-ratio
$$
\log \frac{\delta}{r_{\tilde{w}}}= \log \left( \frac{c}{r_{\tilde{w}}} + \sqrt{\left(\frac{c}{r_{\tilde{w}}}\right)^2+1}\right)= {\text{arsinh}}\left(\frac{c}{r_{\tilde{w}}}\right).
$$

Using the expression for $c$ we previously obtained, note that
$$
\frac{c}{r_{\tilde{w}}} = \frac{1}{2}\left[\left(\frac{1}{r_{\tilde{w}}}-r_{\tilde{w}}\right)\tilde{x}_0 - \left(\frac{1}{r_{\tilde{w}}}+r_{\tilde{w}}\right)\tilde{x}_1\right].
$$
Since 
$$
\frac{1}{r_{\tilde{w}}}-r_{\tilde{w}} = \sqrt{\frac{1+\lambda}{1-\lambda}}-\sqrt{\frac{1-\lambda}{1+\lambda}} = \frac{2\lambda}{\sqrt{1-\lambda^2}} = \frac{2\tilde{w}_0}{\sqrt{\tilde{w}_1^2 - \tilde{w}_0^2}},$$
and similarly,
$$
\frac{1}{r_{\tilde{w}}}+r_{\tilde{w}} = \frac{2}{\sqrt{1-\lambda^2}} =\frac{2\tilde{w}_1}{\sqrt{\tilde{w}_1^2 - \tilde{w}_0^2}},
$$
we can alternatively express the margin as
$$
{\text{arsinh}} \left( \frac{\tilde{w}_0 \tilde{x}_0 - \tilde{w}_1 \tilde{x}_1}{\sqrt{\tilde{w}_1^2 - \tilde{w}_0^2}} \right) = {\text{arsinh}} \left( \frac{\tilde{w} *\tilde{x}}{\sqrt{-\tilde{w} * \tilde{w}}} \right),
$$
using the fact that $\tilde{w}_2=\cdots=\tilde{w}_n=0$.

Finally, since we have shown earlier that our initial transformation of $w$ and $x$ to $\tilde{w}$ and $\tilde{x}$ preserves the Minkowski inner product, we obtain the geometric margin in terms of the original variables as
$$
{\text{arsinh}} \left( \frac{w*x}{\sqrt{-w * w}} \right).
$$

\section*{Proof of Theorem 2}

Note that if we make the rescaling $w\to \kappa w$, the distance of any point to the decision surface is unchanged. Indeed, using Theorem 1, and properties of inner products,
$$\sinh^{-1}\left(\frac{\kappa w* x}{\sqrt{-\kappa w* \kappa w}}\right)
=\sinh^{-1}\left(\frac{\kappa(w* x)}{\kappa\sqrt{-w* w}}\right)
=\sinh^{-1}\left(\frac{(w* x)}{\sqrt{-w* w}}\right)$$
so scaling by $\kappa$ does not matter. Using this freedom, we can assume that $y^{0}(w* \textbf{x}^0)=1$, where $\textbf{x}^0$ is the point with minimum hyperbolic distance to the decision plane specified by $w$, and $y^{0}\in \{-1,1\}$ is the corresponding decision. This is the analogue of the ``canonical representation" of decision hyperplanes familiar from Euclidean SVMs. When $w$ is thus scaled, we have
$$y^{(j)}(w*x^{(j)})\geq 1$$
for all $j\in [m]$. 

Therefore, the optimization problem now simply maximizes 
$$\sinh^{-1}\left(\frac{1}{\sqrt{-w*w}}\right)$$
which is equivalent to minimizing $-\frac{1}{2}w*w$ subject to the above constraint. We add the factor of $\frac{1}{2}$ to simplify  gradient calculations. The additional constraint $w*w<0$ ensures that the decision function specified by $w$ is nontrivial (see main text).


\end{document}